\newtheorem{theorem}{\bf Theorem}
\newtheorem{lemma}{Lemma}
\newtheorem{Remark}{Remark}
\newtheorem{assumption}{\bf Assumption}
\newtheorem{corollary}{\bf Corollary}
\def\BibTeX{{\rm B\kern-.05em{\sc i\kern-.025em b}\kern-.08em
		T\kern-.1667em\lower.7ex\hbox{E}\kern-.125emX}}
\begin{document}
	
	\title{Wyner-Ziv Gradient Compression for Federated Learning
	}
	
	\author{\thanks{This work was supported in part by the National Nature Science Foundation of China (NSFC) under Grant 61901267.
		}\IEEEauthorblockN{Kai Liang\IEEEauthorrefmark{1}\IEEEauthorrefmark{2}\IEEEauthorrefmark{3}, Huiru Zhong\IEEEauthorrefmark{1}\IEEEauthorrefmark{2}\IEEEauthorrefmark{3}, Haoning Chen\IEEEauthorrefmark{1}\IEEEauthorrefmark{2}\IEEEauthorrefmark{3}, and Youlong Wu\IEEEauthorrefmark{1}\\}  
		\IEEEauthorblockA{\IEEEauthorrefmark{1}
			ShanghaiTech University, Shanghai, China}
		\IEEEauthorblockA{\IEEEauthorrefmark{2} Shanghai Institute of Microsystem and Information Technology,
			Chinese Academy of Sciences}
		\IEEEauthorblockA{\IEEEauthorrefmark{3} University of Chinese Academy of Sciences, Beijing, China}
		
		\{liangkai, zhonghr, chenhn, wuyl1\}@shanghaitech.edu.cn
	}

	\maketitle
	
	\begin{abstract}
		Due to limited communication resources at the client and a massive number of model parameters, large-scale distributed learning tasks suffer from communication bottleneck. Gradient compression is an effective method to reduce communication load by transmitting compressed gradients. Motivated by the fact that in the scenario of stochastic gradients descent, gradients between adjacent rounds may have a high correlation since they wish to learn the same model, this paper proposes a \emph{practical} gradient compression scheme for federated learning, which uses historical gradients to compress gradients and is based on Wyner-Ziv coding but without any probabilistic assumption. We also implement our gradient quantization method on the real dataset, and the performance of our method is better than the previous schemes.
	\end{abstract}
	
	\begin{IEEEkeywords}
		federated learning, side information, gradient compression
	\end{IEEEkeywords}
	
	\section{Introduction}
	{Recent years have witnessed a spurt of progress in modern machine learning technology, more effective and complex machine learning models can be trained through large-scale distributed training.  However, in each iteration of the distributed optimization, information exchange among distributed nodes will incur enormous communication loads  due to the large-scale model parameters. } 
	
	We focus on  federated learning\cite{18}, {which is a distributed learning framework that can effectively help multiple clients perform data usage and machine learning modeling while meeting user privacy protection and data security requirements.} In federated learning clients participating in joint training only need to exchange their own gradients information, without sharing private data.  To alleviate  the communication bottleneck,   gradient compression \cite{1,2,3,4,5,6,7,2020Federated} and efficient mean estimator \cite{8,9,10,11,12,13,14} have  been investigated to  reduce the communication load. {However, most of the previous works on compression fall to exploit any historical gradients at the server. In fact, the historical gradients can be viewed as side information to compress source information, which has been widely studied in classical information theory. For example, \cite{15} first studied the setting of lossy source compression with side information in the decoder. Channel coding can obtain practical coding for distributed source coding \cite{16,17}, but the main bottleneck lies on the expensive computational complexity of coding and decoding. Recently, \cite{19} considered the correlation between the data for distributed mean estimation, and obtained the mean estimator of the dependent variance through the lattice coding scheme. \cite{20} studied distributed mean estimation with side information at the server, and proposed  Wyner-Ziv estimators that require no probabilistic assumption on the clients' data. On the other hand, in convex optimization, the use of historical gradients can accelerate convergence, such as heavy ball method, and Nesterov’s Accelerated gradient Descent (NAG) in \cite{boyd2004convex}. Last but not least, for stochastic gradient descent with variance reduction such as stochastic variance reduced gradient (SVRG)\cite{johnson2013accelerating}, historical gradients can also be used to reduce the variance of stochastic gradients. So an interesting question is whether and how historical gradients can be used for gradient compression.
		
		In this paper, we address this question and propose  \emph{practical} gradient compression schemes for  federated learning by exploring historical gradients as side information. Our contributions are summerized as follows: \emph{1):} {Motivated by Wyner-Ziv estimator of \cite{20}, we propose a local quantized stochastic gradient descent (LQSGD), which exploits the historical gradients as side information to compress the local gradients. Different from \cite{20}, we focus on federated learning with side information instead of distributed mean estimation.  In fact, in the scenario of stochastic gradients descent, gradients between adjacent rounds may have a high correlation since they wish to learn the same model. Therefore, we can use historical gradients to compress gradients.}
		\emph{2):} {We establish an upper bound of the average-squared gradients of our quantization methods. Compared with the case of not using any historical gradients, the average-squared gradients may get a smaller upper bound by using the historical gradients. Besides, We also obtain the convergence rate of our scheme under standard assumptions.}
		\emph{3):} We implement our gradient quantization method and compare it with quantized stochastic gradients descent (QSGD)\cite{3} and the uncompressed SGD on real dataset. We use these three schemes to train linear regression model on datase \texttt{cpusmall-scale}\cite{cpu1} and the classic neural network ResNet18\cite{he2016deep} on \texttt{cifar10}\cite{krizhevsky2009learning} and \texttt{cifar100}\cite{krizhevsky2009learning}. The results show that our method is effective and has better performance compared with QSGD.}

	\section{Problem Setting}
	
	{In this article, we consider the scenario of federated learning in which a set of $N$ clients and one server jointly train a model without exchanging clients' own data, as shown in Fig. 1.} Assume that the clients' communication resources are limited and clients can only communicates with the server. Formally, we deal with the following optimization problem:
	\begin{equation}\label{optproblem}
	\min_{\boldsymbol{\omega}\in\mathbb{R}^d} f(\boldsymbol{\omega})=\frac{1}{N}\sum_{j=1}^{N}f_j(\boldsymbol{\omega}),
	\end{equation}
	where $f_j:\mathbb{R}^d\rightarrow\mathbb{R}$ is the local function corresponding to Client $j$ and $\boldsymbol{\omega}$ is the parameter of model. For instance, in statistical machine learning, $f_j$ is usually the local objective function of each Client $j$ which is the expected loss over the set of data points of node $j$, i.e.,
	\begin{equation}
	f_j(\boldsymbol{\omega})=\mathbb{E}_{\boldsymbol{z}\sim\mathcal{P}_j}l_j(\boldsymbol{\omega},\boldsymbol{z}),
	\end{equation}
	where $\boldsymbol{z}$ is a random variable with probability distribution $\mathcal{P}_j$ and $l_j(\boldsymbol{\omega},\cdot)$ is loss function measuring the performance of the model with parameter $\boldsymbol{\omega}$. Note that the probability distributions of clients may not be necessarily identical. In this paper, we mainly concentrate on homogeneous scenarios where the probability distributions and loss functions of all clients are the same, i.e. $\mathcal{P}_1=\mathcal{P}_2=\cdots=\mathcal{P}_N$ and $l_1=l_2=\cdots=l_N$. {Heterogeneous federated learning with different data distributions and loss functions, of course, is interesting and important. As we will see, the scheme proposed in this paper is also applicable to heterogeneous federated learning, whose analysis is left as future work.}
	\begin{figure}
		\centering
		\includegraphics[width=8cm,height=6.5cm]{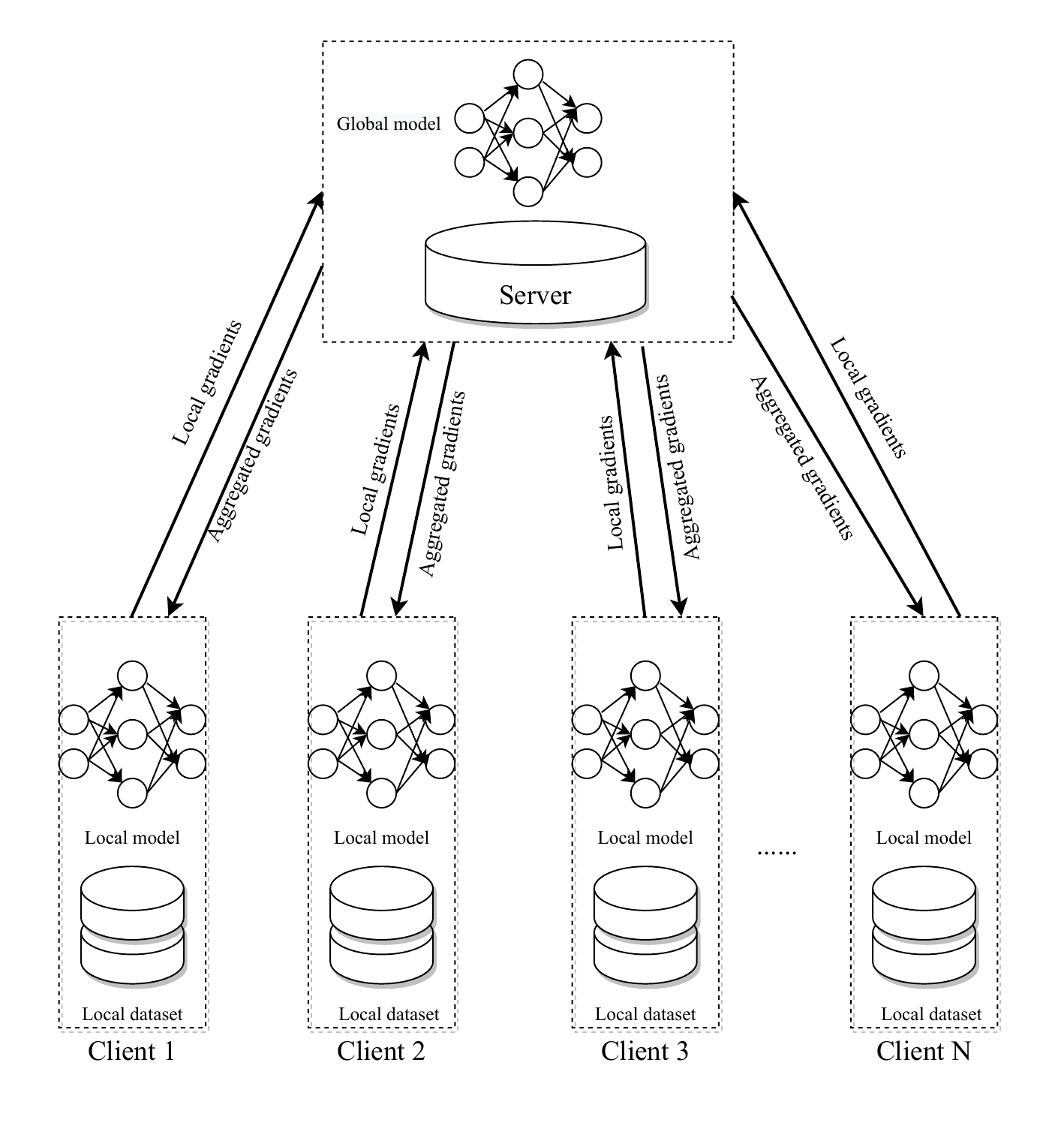} 
		\caption{Federated learning}
		\label{imgModel} 
	\end{figure}
	\section{Federated Aggregation With Side Information}
	In this section, we propose a generalized version of the local quantized stochastic gradient descent (LQSGD) method for federated learning which uses quantized gradients to update model parameters. This method reduces the overall communication overhead by sending fewer bits in each iteration. In Section \ref{fav}, we introduce federated averaging with side information (\texttt{FedSI}) designed for homogeneous settings. Then, in Section \ref{com}, we propose the detailed compression algorithm which uses side information to quantize data. Eventually, we introduce that how to choose the side information used in our method in Section \ref{side}.
	\subsection{Federated Averaging with Side Information(\texttt{FedSI})}\label{fav}
	In federated learning, {the aggregation algorithm is used to make full use of the gradients information of each client}. Each client sends local gradients information to the server after multiple iterations. The server aggregates the gradients sent by clients and then updates the global model and broadcasts it to each client. {Different from the standard federated average algorithm, in \texttt{FedSI}, not only the client sends quantized gradients, but also the global learning rate and local learning rate can be different.}

	Formally, let $R$ be the rounds of communication between server and clients, and $\tau$ be the number of local updates performed between two consecutive communication rounds. Further more, we define $\boldsymbol{\omega}^{(r)}$ as the global model at the master in the $r$-th round of communication. At each round $r$, the server sends the global model $\boldsymbol{\omega}^{(r)}$ to the clients. After that, each Client $j$ computes its local stochastic gradients and updates the model by following the update of SGD
	\begin{equation}
	\boldsymbol{\omega}^{(c+1,r)}=\boldsymbol{\omega}^{(c,r)}-\eta \tilde{g}_j^{(c,r)},\ \text{for}\ c=0,\cdots, \tau-1,
	\end{equation}
	where $\tilde{g}_j^{(c,r)}$ is the estimation of the gradient ${g}_j^{(c,r)}\triangleq\nabla f_j(\boldsymbol{\omega}^{(c,r)})$ and $\eta$ is the leaning rate. For example, for statistical machine learning,  $\tilde{g}_j^{(c,r)}\triangleq\frac{1}{b_j}\sum_{\boldsymbol{z}\in\mathcal{Z}_j^{(c,r)}} \nabla l(\boldsymbol{\omega}^{(c,r)}, \boldsymbol{z})$, where $\mathcal{Z}_j^{(c,r)}$ is the mini-batch of Client $j$ consisting of $b_j$ samples generated by $\mathcal{P}_j$. Next, each client sends the quantized signal $Q(\sum_{c=0}^{\tau-1}\tilde{g}_j^{(c,r)})$ by applying a compression operator $Q(\cdot)$ defined in the Section \ref{com}. When the server receives the signals from all clients, it updates the global model as follows:
	\begin{equation}\label{upd}
	\boldsymbol{\omega}^{(r+1)}=\boldsymbol{\omega}^{(r)}-\frac{\eta\gamma}{N}\sum_{j=1}^{N}Q(\sum_{c=0}^{\tau-1}\tilde{g}_j^{(c,r)}),
	\end{equation}
	where $\gamma$ is the global learing rate. When $\gamma=1$ and all clients send unquantized signals, (\ref{upd}) becomes the standard federated average algorithm.
	\subsection{Compression with Side Information}\label{com}
	In this section, we introduce our compression operator $Q(\cdot)$. The operator $Q(\cdot)$ contains two parts: the encoder $Q_e(\cdot)$ and the decoder $Q_d(\cdot)$, where $Q_e(\cdot):\mathbb{R}^d\rightarrow \{0,1\}^k,\ k\in\mathbb{N}^+$ and $Q_d(\cdot):\{0,1\}^k\rightarrow \mathbb{R}^d,\ k\in\mathbb{N}^+$. 
	
	Now we give a brief introduction about the quantizer $Q_\textnormal{WZ}$\cite{20}, as it is closely related to work.    Since all clients use the same quantizer,  only  the common quantizer is described.  
	We first describe a modulo quantizer $Q_\textnormal{M}$ for one-dimension input $x\in\mathbb{R}$ with side information $h\in\mathbb{R}$, then present a modulo quantizer $Q_{\textnormal{M},d}(x,h)$ for $d$-dimension data.  
	\subsubsection{Modulo Quantizer ($Q_\textnormal{M}$)}\label{cq}  Given the input $x\in\mathbb{R}$ with  side information $h\in\mathbb{R}$, the modulo quantizer $Q_\textnormal{M}$ contains parameters including a distance parameter $\Delta'$ where $|x-h|\leq \Delta'$, a resolution parameter $s\in\mathbb{N}^+$ and a lattice parameter $\epsilon$. 
	
	Denote the encoder and decoder of $Q_{\textnormal{M}}$ as $Q^{\textnormal{e}}_{\textnormal{M}}(x)$ and $Q^{\textnormal{d}}_{\textnormal{M}}(Q^{\textnormal{e}}_{\textnormal{M}}(x),h)$, respectively. 
	The encoder $Q^{\textnormal{e}}_{\textnormal{M}}(x)$ first computes $\lceil x/\epsilon\rceil $ and $\lfloor x/\epsilon\rfloor $, then outputs the message $Q^{\textnormal{e}}_{\textnormal{M}}(x)=m$, where
	\begin{equation}
	m =  \left\{ \begin{array}{llr}
	(\lceil x/\epsilon\rceil\mod s), &~\text{w.p.}~x/\epsilon-\lfloor x/\epsilon\rfloor  \\
	(\lfloor x/\epsilon\rfloor \mod s), &  ~\text{w.p.}~\lceil x/\epsilon\rceil- x/\epsilon
	\end{array}.
	\right.
	\end{equation}
	
	The message $m$ has length of $\log s$ bits, which is  sent  to the decoder.  The decoder $Q^{\textnormal{d}}_{\textnormal{M}}$ produces the estimate $\hat{x}$ by finding a point closest to $h$ in the set $\mathbb{Z}_{m,\epsilon}=\{(zs+m)\cdot\epsilon:z\in\mathbb{Z}\}$. 
	\begin{lemma}{(see \cite{20})}\label{cqpe}
		Consider $Q_\textnormal{M}$ with parameter $\epsilon$ set to satisfy
		\begin{equation}\label{condk}
		s\epsilon\geq2(\epsilon+\Delta').
		\end{equation}
		Then, for every $x,h\in\mathbb{R}$ such that $|x-h|\leq\Delta'$, the output $Q_\textnormal{M}(x)$ satisfies
		\begin{equation*}
		\begin{array}{c}
		\mathbb{E}[Q_\textnormal{M}(x)]=x,\\
		|x-Q_\textnormal{M}(x)|<\epsilon.
		\end{array}
		\end{equation*}
	\end{lemma}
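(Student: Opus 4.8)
The plan is to establish the two claims—unbiasedness $\mathbb{E}[Q_\textnormal{M}(x)]=x$ and the error bound $|x-Q_\textnormal{M}(x)|<\epsilon$—after first pinning down exactly which point the decoder returns. Write $a=\lfloor x/\epsilon\rfloor$ and $b=\lceil x/\epsilon\rceil$, so that the encoder emits $m=b\bmod s$ with probability $p:=x/\epsilon-a$ and $m=a\bmod s$ with probability $1-p=b-x/\epsilon$. The crucial preliminary step is a ``correct decoding'' fact: given message $m$, the decoder's nearest-to-$h$ rule recovers precisely the intended reconstruction point, namely $b\epsilon$ when $m=b\bmod s$ and $a\epsilon$ when $m=a\bmod s$. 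Once this is in hand, both claims follow by elementary computation.

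To prove correct decoding I would exploit that the coset $\mathbb{Z}_{m,\epsilon}=\{(zs+m)\epsilon:z\in\mathbb{Z}\}$ consists of points spaced exactly $s\epsilon$ apart, one of which is the intended point (say $b\epsilon$). Since $b=\lceil x/\epsilon\rceil$ gives $|x-b\epsilon|<\epsilon$, combining this with $|x-h|\leq\Delta'$ via the triangle inequality yields $|h-b\epsilon|<\Delta'+\epsilon$. Any other point of the coset lies at distance at least $s\epsilon$ from $b\epsilon$, hence, by the reverse triangle inequality, at distance at least $s\epsilon-|h-b\epsilon|>s\epsilon-(\Delta'+\epsilon)$ from $h$; invoking the hypothesis $s\epsilon\geq2(\epsilon+\Delta')$ bounds this below by $\Delta'+\epsilon$, which strictly exceeds $|h-b\epsilon|$. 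Thus $b\epsilon$ is the unique nearest coset point to $h$ and the decoder returns it. The identical argument with $a=\lfloor x/\epsilon\rfloor$ settles the case $m=a\bmod s$.

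With decoding settled, unbiasedness is a one-line computation: $\mathbb{E}[Q_\textnormal{M}(x)]=p\,b\epsilon+(1-p)\,a\epsilon$, and substituting $p=x/\epsilon-a$ and $b=a+1$ collapses this convex combination to $x$ (the degenerate case $x/\epsilon\in\mathbb{Z}$ has $p=0$, is deterministic, and returns $x$ trivially). The error bound is then immediate, since $Q_\textnormal{M}(x)\in\{a\epsilon,b\epsilon\}$ and both candidates satisfy $|x-a\epsilon|<\epsilon$ and $|x-b\epsilon|<\epsilon$ by the definitions of floor and ceiling.

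I expect the correct-decoding step to be the only genuine obstacle: it is where the separation hypothesis $s\epsilon\geq2(\epsilon+\Delta')$ is actually used, and one must line up the strict inequalities carefully so that the intended point wins \emph{uniquely} rather than merely ties the nearest-neighbor search. The remaining algebra—the convex-combination identity and the floor/ceiling bounds—is routine.
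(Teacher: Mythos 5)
Your proof is correct. Note that the paper itself supplies no proof of this lemma---it is imported verbatim from reference \cite{20}---so there is no in-paper argument to compare against; your three-step structure (unique correct decoding via the triangle inequality and the separation condition $s\epsilon\geq2(\epsilon+\Delta')$, then unbiasedness from the convex combination $p\,b\epsilon+(1-p)\,a\epsilon=x$, then the error bound from the floor/ceiling estimates) is exactly the standard argument for this quantizer, and you have correctly identified that the strictness bookkeeping in the decoding step is the only place where care is needed.
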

	
	\subsubsection{$d$-dimensional Modulo Quantizer ($Q_{\textnormal{M},d}$)}\label{hcq}
	For d-dimensional data, we use $Q_{\textnormal{M}}$ in each dimension. Specifically, given the input $x\in\mathbb{R}^d$ with  side information $h\in\mathbb{R}^d$, the modulo quantizer $Q_\textnormal{M,d}$ contains parameters including a distance parameter $\Delta'$ where $||x-h||_{\infty}\leq \Delta'$, a resolution parameter $s\in\mathbb{N}^+$ and a lattice parameter $\epsilon$. For the $i$-th dimension data, set $h_i$ to be the side information of $x_i$, where $h_i$ and $x_i$ are the $i$-th data of $h$ and $x$ respectively. For each dimension, we compress the data into $\log s$ bits. Therefore, we use $d\log s$ bits to represent the $d$-dimensional data $x$.
	
	We can easily extend Lemma \ref{cqpe} to $d$-dimensional case, which is given in the following corollary.
	\begin{corollary}\label{co1}
		For $d$-dimensional data, we consider $Q_\textnormal{M,d}$ with parameter $\epsilon$ set to satisfy
		\begin{equation}
		s\epsilon\geq2(\epsilon+\Delta),
		\end{equation}
		Then, for every $\boldsymbol{x},\boldsymbol{y}\in\mathbb{R}^d$ such that $||\boldsymbol{x}-\boldsymbol{y}||_2\leq\Delta$, the output $Q(\boldsymbol{x})$ satisfies
		\begin{equation*}
		\begin{array}{c}
		\mathbb{E}[Q(\boldsymbol{x})]=\boldsymbol{x},\\
		\mathbb{E}||\boldsymbol{x}-Q(\boldsymbol{x})||^2<d\epsilon^2.
		\end{array}
		\end{equation*}
		In addition, if we let $\epsilon=\frac{2\Delta}{s-2}$, we have 
		\begin{equation}\label{var}
		\mathbb{E}||\boldsymbol{x}-Q(\boldsymbol{x})||^2<\frac{4d\Delta^2}{(s-2)^2}.
		\end{equation}
	\end{corollary}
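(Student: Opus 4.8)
The plan is to reduce the $d$-dimensional statement to a coordinate-wise application of Lemma \ref{cqpe}, then aggregate across dimensions. The key structural observation is that $Q_{\textnormal{M},d}$ applies the scalar quantizer $Q_{\textnormal{M}}$ independently in each coordinate, with the side information $h_i$ for coordinate $x_i$. So the first step is to verify that the hypothesis of Lemma \ref{cqpe} holds in every coordinate. Since $\|\boldsymbol{x}-\boldsymbol{y}\|_2 \leq \Delta$ implies $|x_i - y_i| \leq \Delta$ for each $i$, we may take $\Delta' = \Delta$ as the per-coordinate distance parameter, and the condition $s\epsilon \geq 2(\epsilon + \Delta)$ is exactly the $d$-dimensional version of \eqref{condk}. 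Thus Lemma \ref{cqpe} applies to each coordinate and gives $\mathbb{E}[Q_{\textnormal{M}}(x_i)] = x_i$ together with $|x_i - Q_{\textnormal{M}}(x_i)| < \epsilon$.

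The second step is to assemble the unbiasedness claim. Because the quantizer acts independently coordinate-by-coordinate, $\mathbb{E}[Q(\boldsymbol{x})]$ is the vector whose $i$-th entry is $\mathbb{E}[Q_{\textnormal{M}}(x_i)] = x_i$, so $\mathbb{E}[Q(\boldsymbol{x})] = \boldsymbol{x}$ immediately. For the variance bound, I would expand the squared norm coordinate-wise:
\begin{equation*}
\mathbb{E}\|\boldsymbol{x}-Q(\boldsymbol{x})\|^2 = \sum_{i=1}^{d}\mathbb{E}\,|x_i - Q_{\textnormal{M}}(x_i)|^2.
\end{equation*}
Each summand is bounded above by $\epsilon^2$ using the deterministic per-coordinate error bound $|x_i - Q_{\textnormal{M}}(x_i)| < \epsilon$ from Lemma \ref{cqpe}, which gives $\mathbb{E}\|\boldsymbol{x}-Q(\boldsymbol{x})\|^2 < d\epsilon^2$. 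Note the bound here is the expected squared norm rather than a deterministic one, but since every coordinate's error is almost surely below $\epsilon$, the sum of expectations is still strictly below $d\epsilon^2$.

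The final step is the specialization $\epsilon = \frac{2\Delta}{s-2}$. I would first check that this choice is consistent with the constraint $s\epsilon \geq 2(\epsilon+\Delta)$: rearranging that inequality gives $\epsilon(s-2) \geq 2\Delta$, i.e. $\epsilon \geq \frac{2\Delta}{s-2}$, so the stated value is the smallest admissible $\epsilon$ and satisfies the hypothesis with equality. Substituting into $d\epsilon^2$ yields $\frac{4d\Delta^2}{(s-2)^2}$, which is \eqref{var}. I do not anticipate a genuine obstacle here, since the result is essentially a bookkeeping extension of the scalar lemma; the only point requiring mild care is the passage from the $\ell_2$ hypothesis $\|\boldsymbol{x}-\boldsymbol{y}\|_2 \leq \Delta$ to the per-coordinate bound $|x_i-y_i|\leq\Delta$, and confirming that using $\Delta$ (rather than the tighter $\ell_\infty$ distance) in each coordinate is a valid, if slightly loose, choice of distance parameter.
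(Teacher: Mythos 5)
Your proposal is correct and matches the paper's intent exactly: the paper gives no explicit proof, asserting only that Corollary~\ref{co1} is an easy coordinate-wise extension of Lemma~\ref{cqpe}, and your argument (per-coordinate application of the scalar lemma with $\Delta'=\Delta$, justified by $|x_i-y_i|\leq\|\boldsymbol{x}-\boldsymbol{y}\|_2$, followed by summing the squared errors and substituting $\epsilon=\frac{2\Delta}{s-2}$) is precisely that extension. Your care about the strictness of the expected-squared-error bound and the consistency of the chosen $\epsilon$ with the constraint is a welcome bonus.
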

	\begin{Remark}
		In the previous works on mean estimation, such as \cite{8} and \cite{20}, by preprocessing the d-dimensional data, we can get a smaller mean square error. At a high-level, we can rotate the $d$-dimensional data $x$ and $h$ at the same time, for example, $x$ and $h$ are multiplied by  $W = \frac{1}{\sqrt{d}}HD$, where $D$ is a random diagonal matrix with i.i.d. Rademacher entries($\pm1$ with equal probability) and $H$ is a Walsh-Hadamard matrix \cite{21}. After the data is preprocessed, $||Wx-Wh||_\infty$ is much less than $||Wx-Wh||_2=||x-h||_2$ with a high probability. This means that we can estimate the data more accurately while keeping the Euclidean distance unchanged. However, this method requires additional matrix calculations, and the time complexity is $O(d\log d)$. Because we concern about the role of side information in federated learning, for convenience, our implementation does not preprocess the data.
	\end{Remark}
	
	
	\subsection{Select Side Information}\label{side}
	In this part, we discuss which data to be chosen as side information for compressing gradients in federated learning. In convex optimization, historical gradients can accelerate convergence and for stochastic gradient descent with variance reduction, it can also be used to reduce the variance of stochastic gradient. Motivated by the role of historical gradients in accelerating convergence, a natural question is whether the historical gradients can help us compress the current gradients. In fact, gradients between adjacent rounds may have a high correlation since they wish to learn the same model. Therefore, we can regard the historical gradients as the side information of the local gradients.

	{We let $N$ clients share the same side information, which can effectively save memory space for storing side information when the dimension $d$ is large.} Since clients upload quantized gradients, the aggregated gradients broadcasted by the server in the last round may differ greatly from the current local gradients of clients. {In this case, using historical gradients as side information would be worse than the standard QSGD quantization scheme. Therefore, we set a threshold $t$ ($0<t\leq1$), and if the ratio of the distance between the current gradients and historical aggregated gradients to the norm of the current gradients is less than $t$, we use historical gradients as side information, otherwise, we do not use any side information.} Formally, at the $r$-th round of communication, Client $j$ obtains the local gradients $\sum_{c=0}^{\tau-1}\tilde{g}_j^{(c,r)}$ {and calculates}
	\begin{equation}
	D_j^r=\frac{||\sum_{c=0}^{\tau-1}\tilde{g}_j^{(c,r)}-U_q^{r-1}||_2}{||\sum_{c=0}^{\tau-1}\tilde{g}_j^{(c,r)}||_2},
	\end{equation} 
	where $U_q^{r-1}\triangleq\frac{1}{N}\sum_{j=1}^{N}Q(\sum_{c=0}^{\tau-1}\tilde{g}_j^{(c,r-1)})$. If $D_j^r$ is less than $t$, Client $j$ uses historical gradients $U_q^{r-1}$ to compress $\sum_{c=0}^{\tau-1}\tilde{g}_j^{(c,r)}$, otherwise it will not use any side information to compress gradients. For convenience, we use $\alpha_j^r$ to indicate that whether Client $j$ uses historical gradients $U_q^{r-1}$ as side information to compress local gradients at $r$-th round, i.e.,
	\begin{equation}\label{ind}
	\alpha_j^r=  \left\{ \begin{array}{llr}
	1, &D_j^r<t,\\
	0, &  ~\text{otherwise.}
	\end{array}
	\right.
	\end{equation}
	\begin{Remark}
		{Sending the bin number $m$ of each coordinate is a natural way of transmitting messages. However, this naive implementation is sub-optimal. In fact, we can encode the transmitted values by using universal compression schemes\cite{elias1975universal},\cite{apostolico1987robust}. For each client, the bin number $m$ decoded by the server is the same, because lossless compression is used to compress the bin number $m$. Therefore, we use the natural transmission method for convenience, which means that, each client simply transmits $d\log s$ bits in each round of communication.}
	\end{Remark}
	\begin{Remark}
		If we set distance parameter $\Delta'$ as $||x||_2$ and side information $h$ as $0$, our quantizer becomes standard QSGD. Therefore, QSGD can be regarded as a quantizer without any side information. Similarly, if we set distance parameter $\Delta'$ as $||x||_\infty$ and side information $h$ as $0$, our quantizer becomes a variant of QSGD\cite{3} (i.e., QSGDinf\cite{3}, for convenience, we still call it QSGD). Since both the client and the server store side information, we can set the distance parameter $\Delta'$ as $||x-h||_2$ or $||x-h||_\infty$, and it is necessary for the client to send $||x-h||_2$ or $||x-h||_\infty$ to the server. Note that when the dimension $d$ is large enough, the communication cost of sending distance parameter is negligible.
	\end{Remark}

	We formally describe the method in Algorithm~\ref{alg}.
	\begin{algorithm}
		\caption{Federated averaging with side information (\texttt{FedSI})}\label{alg}
		\hspace*{0.02in}{\bf Input:}
		Number of communication rounds $R$, number of local updates $\tau$ , learning rates $\gamma$ and $\eta$, initial global model $\omega^{(0)}$, side information $U_q^{-1}=0$ and threshold $t$
		\begin{algorithmic}[1]
			\For{$0\leq r\leq R-1$}
			\For{each client $j\in[N]$}
			\State Set $\omega_j^{(0,r)}=\omega^{(r)}$
			\For{$0\leq c\leq \tau-1$}
			\State Sample a minibatch $\mathcal{Z}_j^{(c,r)}$ and compute $\tilde{g}_j^{(c,r)}$ 
			\State $\boldsymbol{\omega}^{(c+1,r)}=\boldsymbol{\omega}^{(c,r)}-\eta \tilde{g}_j^{(c,r)}$
			\EndFor
			\State Compute $D_j^r=\frac{||\sum_{c=0}^{\tau-1}\tilde{g}_j^{(c,r)}-U_q^{r-1}||_2}{||\sum_{c=0}^{\tau-1}\tilde{g}_j^{(c,r)}||_2}$
			\State Compute $\alpha_j^r$ according to (\ref{ind})
			\State Client $j$ sends $Q^{e}(\sum_{c=0}^{\tau-1}\tilde{g}_j^{(c,r)})$, $\alpha_j^r$ and $||\sum_{c=0}^{\tau-1}\tilde{g}_j^{(c,r)}-\alpha_j^r U_q^{r-1}||_2$
			\EndFor
			\State Server computes $U_q^r=\frac{1}{N}\sum_{j=1}^{N}Q(\sum_{c=0}^{\tau-1}\tilde{g}_j^{(c,r)})$
			\State Server updates ${\omega}^{(r+1)}=\boldsymbol{\omega}^{(r)}-\eta\gamma U_q^r$ and broadcasts to all clients
			\EndFor  
		\end{algorithmic}	
	\end{algorithm}
	
	\section{Convergence Analysis}
	Next, we present the convergence analysis of our proposed
	method. At the beginning of this section, we state our assumptions which are customary in the analysis of methods with compression (same as \cite{2020Federated}).
	\begin{assumption}\label{ass2}(Smoothness and Lower Boundedness).
		The local objective function $f_j$ of $j$-th client is differentiable for $j\in[m]$ and $L$-smooth, i.e.,
		$||\nabla f_j(\boldsymbol{x})-\nabla f_j(\boldsymbol{y})||\leq L||\boldsymbol{x}-\boldsymbol{y}||$. Moreover,
		the optimal value of objective function $f(\boldsymbol{\omega})$ is
		bounded below by $f^*=\min_{\boldsymbol{\omega}}f(\boldsymbol{\omega})>-\infty$
	\end{assumption}
	\begin{assumption}\label{ass3}
		For all $j\in[N]$, we can sample an independent mini-batch $\mathcal{Z}_j$ of size $|\mathcal{Z}_j^{(c,r)}|=b_j$ and compute an unbiased stochastic gradient $\tilde{g}_j=\nabla f_j(\boldsymbol{\omega};\mathcal{Z}_j); \mathbb{E}_{\mathcal{Z}_j}(\tilde{g}_j)=\nabla f(\boldsymbol{\omega})=g$. Besides, their variance is bounded above by a constant $\sigma^2$,	i.e., $\mathbb{E}||\tilde{g}_j-g||^2\leq \sigma^2$,
	\end{assumption}
	
	In the following theorem, we state our main theoretical results for $\texttt{FedSI}$ in the homogeneous setting.
	\begin{theorem}\label{main}
		For $\texttt{FedSI}$, given the number of communication rounds $R$, the number of local updates $\tau$ , learning rates $\gamma$ and $\eta$, initial global model $\omega^{(0)}$ and threshold $t$, under Assumptions \ref{ass2} and \ref{ass3}, if the learning rate satisfies
		\begin{equation}\label{lea}
		1\geq \tau^2L^2\eta^2+(\frac{q}{N}+1)\tau\gamma L\eta,
		\end{equation}
		where $q=\frac{4d}{(s-2)^2}$, then the average-squared gradients after $R$ communication rounds is bounded as follows:
		\begin{equation}\label{con}
		\begin{aligned}
		\frac{1}{R}\sum_{r=0}^{R-1}||\nabla f(\omega^{(r)})||^2&\leq \frac{2(f(\omega^{(0)})-f(\omega^{(*)}))}{\tau\gamma\eta R}\\
		&\ \!+\!\frac{\gamma L\eta(q\alpha(t\!-\!1)\!+\!q\!+\!1)}{N}\sigma^2\!+\! \tau L^2\eta^2\sigma^2,
		\end{aligned}
		\end{equation}
		where $\alpha\triangleq\frac{1}{RN}\sum_{r,j}\alpha_j^r$.
	\end{theorem}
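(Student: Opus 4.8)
The plan is to run the standard descent-lemma argument for non-convex $L$-smooth objectives, augmented with careful control of the two noise sources---stochastic sampling and quantization---and of the local drift introduced by the $\tau$ inner steps. First I would invoke Assumption~\ref{ass2} to write the one-round descent inequality
\[
f(\boldsymbol{\omega}^{(r+1)}) \le f(\boldsymbol{\omega}^{(r)}) + \langle \nabla f(\boldsymbol{\omega}^{(r)}),\, \boldsymbol{\omega}^{(r+1)}-\boldsymbol{\omega}^{(r)}\rangle + \tfrac{L}{2}\|\boldsymbol{\omega}^{(r+1)}-\boldsymbol{\omega}^{(r)}\|^2,
\]
and substitute the global update~(\ref{upd}). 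I would then take expectations in two stages---first over the quantizer randomness conditioned on the gradients, then over the mini-batch sampling---using the unbiasedness $\mathbb{E}[Q(\boldsymbol{x})]=\boldsymbol{x}$ from Corollary~\ref{co1} together with $\mathbb{E}_{\mathcal{Z}_j}[\tilde g_j]=g$ from Assumption~\ref{ass3}, so that in the homogeneous setting the cross term collapses to $-\tau\gamma\eta\,\langle \nabla f(\boldsymbol{\omega}^{(r)}),\, \tfrac{1}{N\tau}\sum_{j,c}\nabla f_j(\boldsymbol{\omega}^{(c,r)})\rangle$.

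The linear (inner-product) term I would handle via the polarization identity $\langle a,b\rangle = \tfrac12(\|a\|^2+\|b\|^2-\|a-b\|^2)$, which extracts the desired $-\|\nabla f(\boldsymbol{\omega}^{(r)})\|^2$ contribution while leaving a drift remainder $\|\tfrac{1}{N\tau}\sum_{j,c}\nabla f_j(\boldsymbol{\omega}^{(c,r)})-\nabla f(\boldsymbol{\omega}^{(r)})\|^2$. Using $L$-smoothness this drift is dominated by the cumulative distances $\|\boldsymbol{\omega}^{(c,r)}-\boldsymbol{\omega}^{(r)}\|^2$, which are controlled by unrolling the local SGD recursion and applying the bounded-variance bound of Assumption~\ref{ass3}; this is the source of the $\tau L^2\eta^2\sigma^2$ term in~(\ref{con}).

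The quadratic term $\tfrac{L}{2}\|\boldsymbol{\omega}^{(r+1)}-\boldsymbol{\omega}^{(r)}\|^2$ is where the side information enters. Decomposing the aggregated quantized gradient into its mean plus quantization noise, I would apply the variance bound of Corollary~\ref{co1} with $\Delta=\|\sum_{c}\tilde g_j^{(c,r)}-\alpha_j^r U_q^{r-1}\|_2$, so that the per-client compression error is at most $q\,\|\sum_{c}\tilde g_j^{(c,r)}-\alpha_j^r U_q^{r-1}\|_2^2$ with $q=\tfrac{4d}{(s-2)^2}$. For a client with $\alpha_j^r=1$ the threshold rule $D_j^r<t$ gives $\|\sum_{c}\tilde g_j^{(c,r)}-U_q^{r-1}\|_2^2 < t^2\|\sum_{c}\tilde g_j^{(c,r)}\|_2^2 \le t\,\|\sum_{c}\tilde g_j^{(c,r)}\|_2^2$, using $0<t\le 1$, whereas for $\alpha_j^r=0$ the factor is simply $1$. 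Combining the two cases yields the unified compression coefficient $q\bigl(1+\alpha_j^r(t-1)\bigr)$; averaging over $r,j$ and replacing $\|\sum_{c}\tilde g_j^{(c,r)}\|_2^2$ by its mean-plus-variance decomposition then produces the $q\alpha(t-1)+q$ part of the $\sigma^2$ coefficient, with the remaining $+1$ coming from the raw stochastic-gradient variance and the $\tfrac{\gamma L\eta}{N}$ prefactor from the $\tfrac1N$ aggregation.

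Finally I would rearrange to isolate $\|\nabla f(\boldsymbol{\omega}^{(r)})\|^2$, sum over $r=0,\dots,R-1$, telescope the $f(\boldsymbol{\omega}^{(r)})$ values against $f^{*}$, and divide by $R$. The learning-rate condition~(\ref{lea}) is precisely what guarantees that the coefficient multiplying $\|\nabla f(\boldsymbol{\omega}^{(r)})\|^2$---into which the positive drift and compression contributions have been folded---remains bounded below by a constant multiple of $\tau\gamma\eta$, so that dividing through produces the leading $\tfrac{2(f(\omega^{(0)})-f^{*})}{\tau\gamma\eta R}$ term. I expect the main obstacle to be the bookkeeping in the quadratic term: one must simultaneously account for the quantization variance, convert $\|\sum_{c}\tilde g_j^{(c,r)}\|_2^2$ back into $\|\nabla f(\boldsymbol{\omega}^{(r)})\|^2$ plus drift and variance, and verify that the net positive multiple of $\|\nabla f\|^2$ is absorbed by the negative descent term---which is exactly the point at which condition~(\ref{lea}) must be applied with care.
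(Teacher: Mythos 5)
Your proposal is correct and follows essentially the same route as the paper: the descent lemma under $L$-smoothness, the polarization identity plus local-drift bound for the inner-product term (the paper's Lemmas~\ref{l2} and~\ref{l3}, imported from \cite{2020Federated}), and the mean-plus-variance decomposition of the quadratic term with the per-client compression coefficient $q(1+\alpha_j^r(t-1))$ obtained from Corollary~\ref{co1} and the threshold rule with $t^2\le t$ (the paper's Lemma~\ref{l1}). The only detail worth making explicit in a full write-up is the conditional independence of the quantization noise across clients, which is what turns the aggregated compression variance into a $\frac{1}{N^2}\sum_j$ sum and yields the $\frac{1}{N}$ prefactor in the final bound.
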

	\begin{proof}
		See Appendix \ref{pro}.
	\end{proof}
	\begin{Remark}
		Compared with the case where the historical gradients are not used, i.e., $\alpha=0$, our upper bound is reduced by $\frac{\gamma L\eta(q\alpha(1\!-\!t)\!)}{N}\sigma^2$. Obviously, if the learning rates of the two schemes remain the same and $L$, $q$, $\alpha$, and $\sigma^2$ are huge enough, using historical gradients to compress the gradients will converge faster.
	\end{Remark}
	\begin{Remark}
		If we set $\eta\gamma=\Theta(\frac{\sqrt{N}}{\sqrt{R\tau}})$, the upper bound of the average-squared gradients is $O(\frac{1}{\sqrt{NR\tau}}+\frac{1}{R})$. Based on this bound, if we set th communication rounds $R=O(\frac{1}{\delta})$ and the local updates $\tau=O(\frac{1}{N\delta})$,  the average-squared gradients can achieve a $\delta$-accurate.
	\end{Remark}

	\section{Experiments}
	For the simulations, we consider three real datasets: \texttt{cpusmall-scale}\cite{cpu1}, \texttt{cifar10}\cite{krizhevsky2009learning} and \texttt{cifar100}\cite{krizhevsky2009learning}. We implement our gradients quantization method, which we call LQSGD for simplicity and compare it with quantized stochastic gradients descent (QSGD)\cite{3} and uncompressed SGD on real dataset. We use these three schemes to train linear regression model on datase \texttt{cpusmall-scale} and the classic neural network ResNet18\cite{he2016deep} on \texttt{cifar10} and \texttt{cifar100}
	\subsection{Dataset \texttt{cpusmall-scale}}
	The number of samples in the dataset \texttt{cpusmall-scale} is $8192$, and the feature dimension $d$ of each sample is $12$. The classic problem of least-squares is as follows: Some matricex $A$ and target vectors $\boldsymbol{b}$ are given as input and the purpose is to find $\omega^*=\arg\min_{{\omega}}||A\omega-b||_2$. {For convenience, we set the number of local updates $\tau$ as $1$, and simulate the scenarios consisting of eight clients. In order to simulate the scenario where each client's data is sampled from the same distribution, we shuffle the 8192 samples, afterwards assign 1024 samples to each client. We measure the effect on the convergence of the SGD process of the quantization schemes in the next step. After running three algorithms LQSGD, QSGD, and GD, we record the loss and the number of iterations.}
	\begin{figure}
		\centering
		\includegraphics[width=0.5\textwidth]{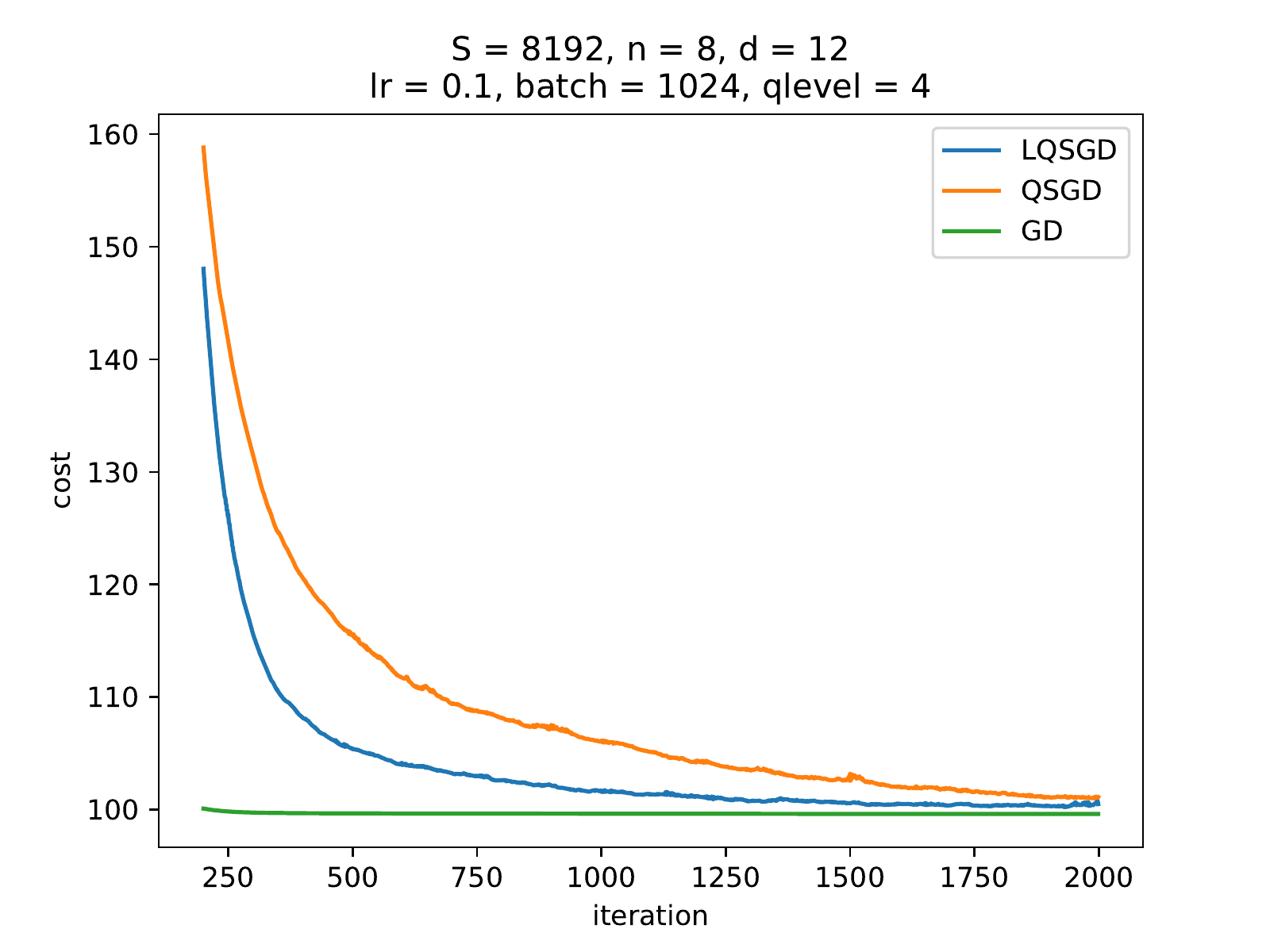} 
		\caption{Convergence at $2$ bits per coordinate}
		\label{lr1} 
	\end{figure}
	
	As shown in Fig. \ref{lr1}, when the learning rate is set as $0.1$ and the value of each coordinate is compressed into $2$ bits, the loss of GD has converged at 250 iterations, the loss of LQSGD decreases faster than QSGD, and LQSGD converges at 1200 iterations, while QSGD converges at 1700 iterations.
	\begin{figure}
		\centering
		\includegraphics[width=0.5\textwidth]{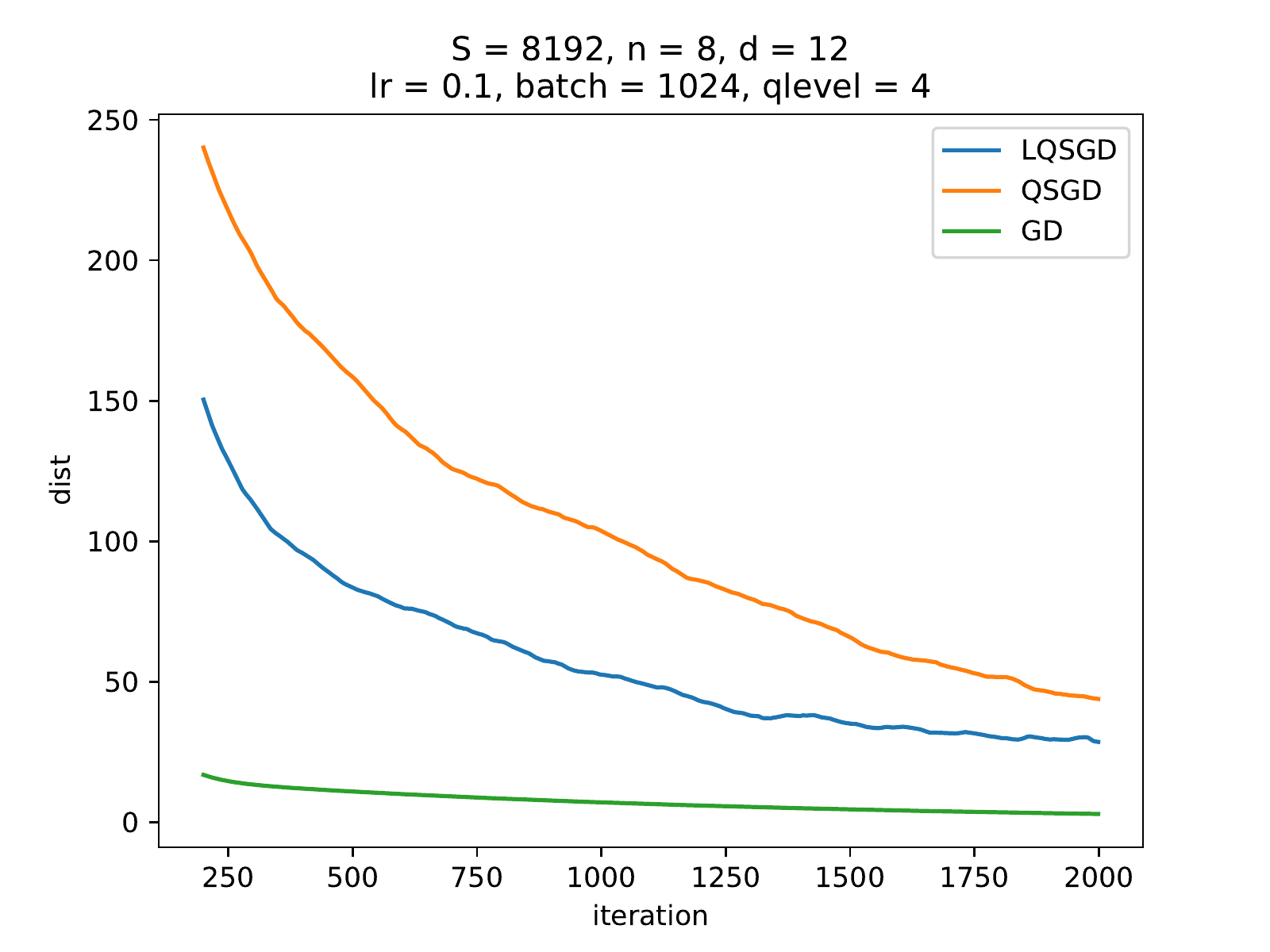} 
		\caption{Distance at $2$ bits per coordinate}
		\label{lr2} 
	\end{figure}
	
	{We also compare the distance (Euclidean distance) among the model parameters obtained by these methods and the optimal model parameters. As shown in Fig. \ref{lr2}, the model parameters obtained by LQSGD are closer to the optimal parameters than model parameters obtained by QSGD.
		\subsection{Dataset \texttt{cifar10} and \texttt{cifar100}}
		\texttt{cifar10} has a total of $60000$ color images of size $32*32$, and these images are divided into $10$ categories, each of them with $6000$ images. There are $50000$ images used for training, forming $5$ training batches with $10000$ images, the other $10000$ are used for testing, forming a single batch. The data of the test batch is taken from each of the $10$ categories, and $1000$ images are randomly taken from each category. The rest is randomly arranged to form a training batch. \texttt{cifar100} is similar to \texttt{cifar10}, except that it has $100$ classes and each class contains $600$ images. Every category has $500$ training images and $100$ test images. We also simulate the scenario of $8$ clients and randomly assign training images to eight clients to ensure that the number of images in each class is the same for each client. We train the classic neural network ResNet18 on datasets. It is impractical to calculate the $2$-norm of the gradients because the number of parameters of ResNet18 is about $33$ million. In practice, we used the $\infty$-norm of gradients in LQSGD and QSGD.}
	\begin{table}[h]
		\centering 
		\caption{Test accuracy on \texttt{cifar10} and \texttt{cifar100} using $3$ bits (except for SGD) with $8$ clients} 
		\begin{tabular}{|p{2.5cm}|p{2.5cm}|p{2.5cm}|}   
			\hline  
			\hline  
			& \texttt{cifar10} & \texttt{cifar100} \\ 
			\hline
			SGD & $\bm{94.61\%}$& 76.44\%\\  
			\hline  
			QSGD& 94.19\% & 76.03\%\\  
			\hline  
			LQSGD &94.53\%&$\bm{76.45\%}$\\  
			\hline  
			\hline  
		\end{tabular}  
		\label{ta}
	\end{table} 
	We train $100$ epoches of the model ResNet18 and  evaluate the models obtained by using LQSGD, QSGD, and SGD on the testset. The test accuracy is shown in Table \ref{ta}. As shown in Table \ref{ta}, on the dataset \texttt{cifar10}, compared with SGD, the test accuracy of QSGD loses $0.42\%$, while the test accuracy of LQSGD loses $0.08\%$. Similarly, on the dataset \texttt{cifar100}, compared with SGD, the test accuracy of QSGD loses $0.41\%$, but the test accuracy of LQSGD is almost the same as that of SGD. It should be noted that the test accuracy of LQSGD is higher than the test accuracy of SGD and this is because our quantization scheme is an unbiased estimate of the gradients, which means that we add additional noise with a mean value of $0$ to the gradients. This may cause our test accuracy to be higher than that of SGD.
	
	We also record the relationship between the test accuracy and the number of epoches. The details are shown in Fig. \ref{nn1} and \ref{nn2}. As shown in Fig. \ref{nn1} and \ref{nn2}, After about $70$ epoches, the test accuracy curves of LQSGD and SGD are almost above the test accuracy curve of QSGD. This means that the performance of our method is better than that of QSGD after about $70$ epoches.
	
	\begin{figure}
		\centering
		\includegraphics[width=0.5\textwidth]{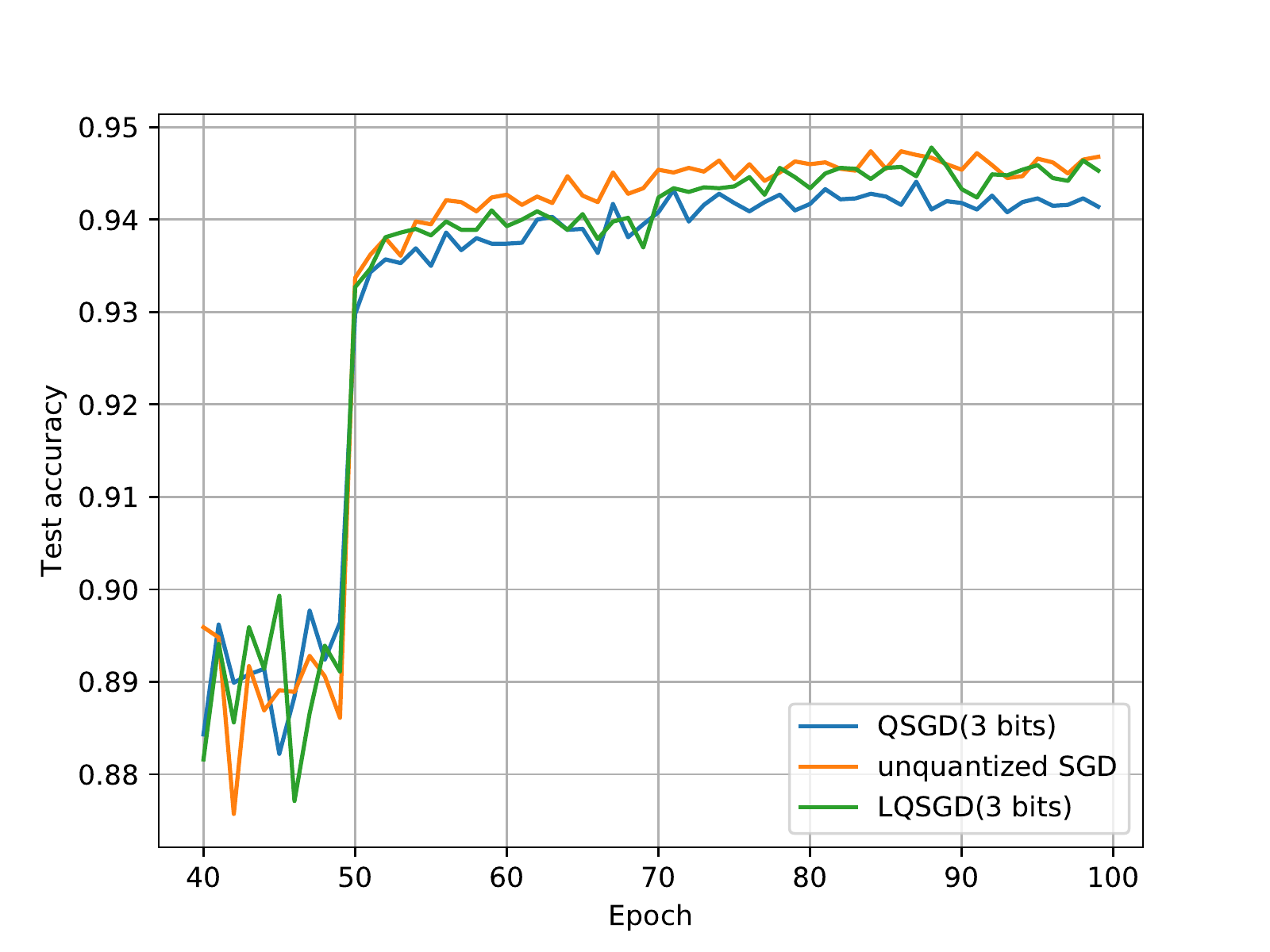} 
		\caption{Test accuracy on \texttt{cifar10} for $8$ clients}
		\label{nn1} 
	\end{figure}
	\begin{figure}
		\centering
		\includegraphics[width=0.5\textwidth]{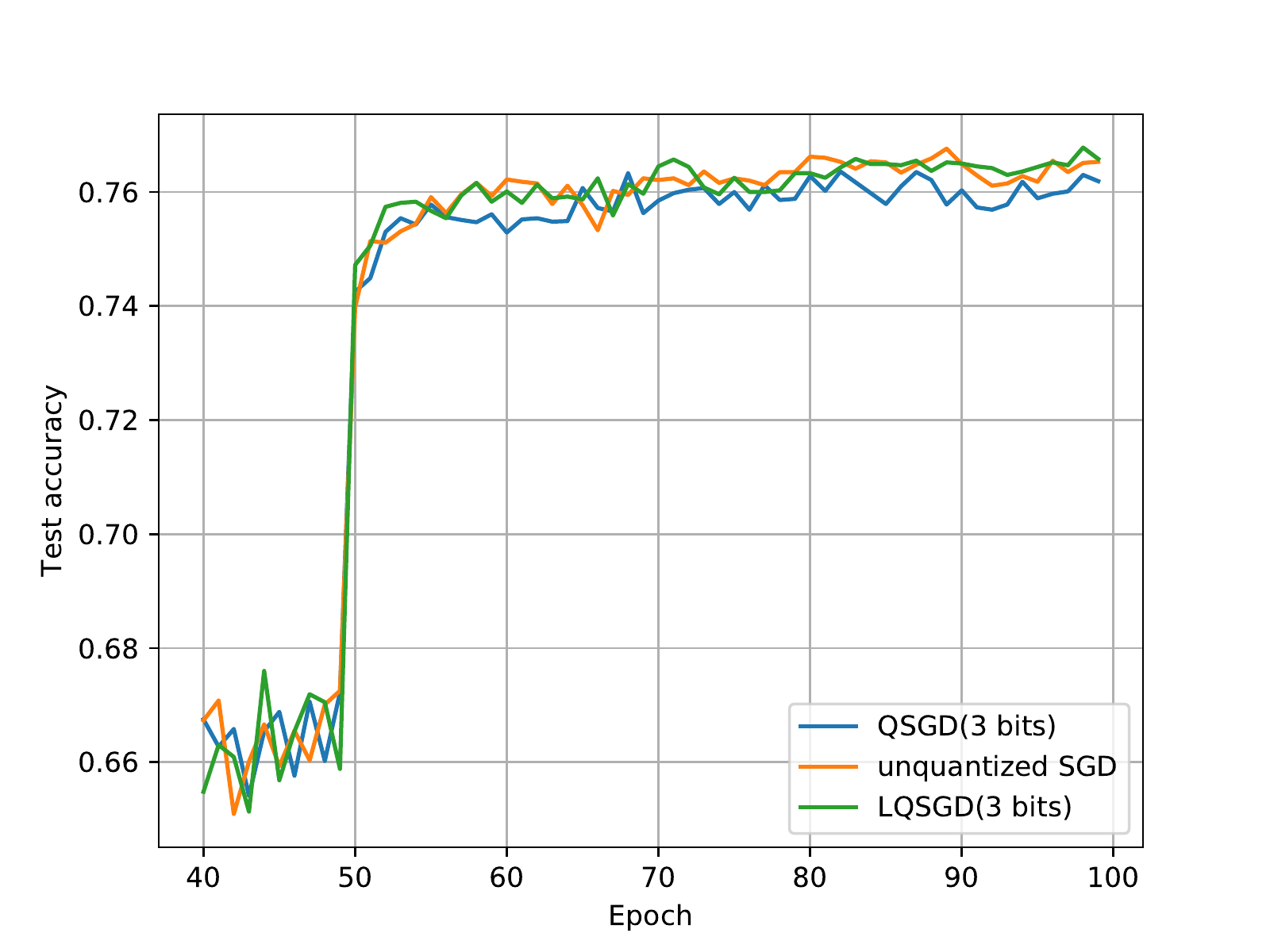} 
		\caption{Test accuracy on \texttt{cifar100} for $8$ clients}
		\label{nn2} 
	\end{figure}
	\section{Conclusions and Future works}
	In this paper, we proposed a gradients quantization method for federated learning, which uses the historical gradients as side information to compress the local gradients. We gave an upper bound of the average-squared gradients of quantization methods and also proved the convergence rate of our scheme under standard assumptions. We  not only implemented our gradients quantization method, but also demonstrated the superiority of our scheme over previous work QSGD empirically on deep models and linear regression. In future work, we will extend our scheme to heterogeneous settings and look for ways to use side information more efficiently in federated learning.
	
	\bibliographystyle{IEEEtran}
	\bibliography{1.bib}
	\newpage
	\appendices
	\section{Proof of Theorem \ref{main}}\label{pro}
	Our proof method is similar to the method in \cite{2020Federated}, and in our proof we use the intermediate results in \cite{2020Federated}.
	Before stating the proof of  Theorem \ref{main}, we first give the following lemma.
	\begin{lemma}\label{l1}
		Under Assumptions \ref{ass3} and Corollary \ref{co1}, we have the following bound:
		\begin{equation}
		\begin{aligned}
		\mathbb{E}_{Q,\boldsymbol{z}}&\big|\big|\frac{1}{N}\sum_{j=1}^{N}Q(\sum_{c=0}^{\tau-1}\tilde{g}_j^{(c,r)})\big|\big|^2\\
		&\leq \frac{\tau\sigma^2}{N}(\frac{q}{N}\sum_{j=1}^{N}(\alpha_j^r(t-1)+1)+1)\\
		&\quad\!+\!\frac{\tau}{N}\sum_{j=1}^{N}(\frac{q(\alpha_j^r(t-1)+1)}{N}\!+\!1)\sum_{c=0}^{\tau-1}||g_j^{(c,r)}||^2.
		\end{aligned}
		\end{equation}
	\end{lemma}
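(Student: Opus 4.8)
The plan is to peel the expectation in two layers: first over the quantizer randomness $Q$ with the local aggregates held fixed, then over the sampling noise $\boldsymbol{z}$. Write $\boldsymbol{u}_j \triangleq \sum_{c=0}^{\tau-1}\tilde{g}_j^{(c,r)}$ for the quantity each client feeds into $Q$. Since the clients draw independent quantizer randomness and Corollary \ref{co1} gives $\mathbb{E}_Q[Q(\boldsymbol{u}_j)]=\boldsymbol{u}_j$, the cross terms across clients vanish and the standard bias--variance split yields
\begin{equation*}
\mathbb{E}_Q\Big\|\tfrac1N\sum_{j}Q(\boldsymbol{u}_j)\Big\|^2=\Big\|\tfrac1N\sum_j\boldsymbol{u}_j\Big\|^2+\frac{1}{N^2}\sum_{j}\mathbb{E}_Q\|Q(\boldsymbol{u}_j)-\boldsymbol{u}_j\|^2.
\end{equation*}

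First I would control the per-client quantization variance using Corollary \ref{co1}, where the distance parameter $\Delta$ is chosen according to the side-information rule of Section \ref{side}. When $\alpha_j^r=1$ the quantizer uses $h=U_q^{r-1}$ and $\Delta=\|\boldsymbol{u}_j-U_q^{r-1}\|_2$; the threshold test $D_j^r<t$ then gives $\Delta^2\le t^2\|\boldsymbol{u}_j\|^2\le t\|\boldsymbol{u}_j\|^2$ since $0<t\le1$. When $\alpha_j^r=0$ no side information is used and $\Delta=\|\boldsymbol{u}_j\|_2$. Both cases are captured uniformly by $\mathbb{E}_Q\|Q(\boldsymbol{u}_j)-\boldsymbol{u}_j\|^2\le q\big(\alpha_j^r(t-1)+1\big)\|\boldsymbol{u}_j\|^2$ with $q=4d/(s-2)^2$.

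Next, taking $\mathbb{E}_{\boldsymbol{z}}$, I would bound the two remaining second moments by separating the true-gradient sum $\boldsymbol{s}_j\triangleq\sum_c g_j^{(c,r)}$ from the noise $\boldsymbol{\xi}_j\triangleq\boldsymbol{u}_j-\boldsymbol{s}_j$. Jensen's inequality gives $\|\boldsymbol{s}_j\|^2\le\tau\sum_c\|g_j^{(c,r)}\|^2$, while Assumption \ref{ass3} together with the conditionally-zero-mean (martingale) structure of the stochastic-gradient noise across the $\tau$ local steps yields $\mathbb{E}_{\boldsymbol{z}}\|\boldsymbol{\xi}_j\|^2\le\tau\sigma^2$; the cross term between $\boldsymbol{s}_j$ and $\boldsymbol{\xi}_j$ is handled by the same tower-property argument. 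Hence $\mathbb{E}_{\boldsymbol{z}}\|\boldsymbol{u}_j\|^2\le\tau\sum_c\|g_j^{(c,r)}\|^2+\tau\sigma^2$, and for the averaged term I additionally invoke independence of the noise across the $N$ clients so that $\mathbb{E}_{\boldsymbol{z}}\|\tfrac1N\sum_j\boldsymbol{u}_j\|^2\le\tfrac{\tau}{N}\sum_j\sum_c\|g_j^{(c,r)}\|^2+\tfrac{\tau\sigma^2}{N}$.

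Substituting these into the split and collecting the $\sigma^2$ terms and the $\sum_c\|g_j^{(c,r)}\|^2$ terms separately reproduces the two summands of the claimed bound. The main obstacle I anticipate is precisely obtaining the variance contribution \emph{linear} in $\tau$ (i.e. $\tau\sigma^2$ rather than $\tau^2\sigma^2$): this forces one to separate the gradient path from the noise and to argue the cross terms vanish via the conditional unbiasedness of $\tilde{g}_j^{(c,r)}$, rather than naively applying Jensen to $\|\sum_c\tilde{g}_j^{(c,r)}\|^2$. A secondary subtlety is that the indicator $\alpha_j^r$ and the distance $\Delta$ are themselves functions of the sampling randomness, so the inner quantizer expectation must be taken conditionally on $\boldsymbol{z}$ before the outer $\mathbb{E}_{\boldsymbol{z}}$ is applied.
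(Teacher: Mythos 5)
Your proposal is correct and follows essentially the same route as the paper's proof: the bias--variance split over the quantizer randomness, the per-client bound $\mathbb{E}_Q\|Q(\boldsymbol{u}_j)-\boldsymbol{u}_j\|^2\leq q(\alpha_j^r(t-1)+1)\|\boldsymbol{u}_j\|^2$ from Corollary \ref{co1} together with the threshold rule, and then the second bias--variance split over the sampling noise with Jensen giving the factor $\tau$ on the true-gradient terms and independence across clients giving the $\tau\sigma^2/N$ term. Your explicit remarks on the martingale structure of the local-step noise and on conditioning the quantizer expectation on $\boldsymbol{z}$ are points the paper handles only implicitly, but they do not change the argument.
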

	\begin{proof}
		For convenience, let $\tilde{g}_j^{(r)}\triangleq\sum_{c=0}^{\tau-1}\tilde{g}_j^{(c,r)}$, $\tilde{g}_{Qj}^{(r)}\triangleq Q(\tilde{g}_j^{(r)})$ and $\tilde{g}_Q^{(r)}\triangleq\frac{1}{N}\sum_{j=1}^{N}\tilde{g}_{Qj}^{(r)}$. Then, we have
		\begin{equation}
		\begin{aligned}
		\mathbb{E}&_{Q,\boldsymbol{z}}\big|\big|\tilde{g}_Q^{(r)}\big|\big|^2\\
		&=\mathbb{E}_{\boldsymbol{z}}\bigg[\mathbb{E}_{Q}\big[\big|\big|\tilde{g}_Q^{(r)}\!-\!\mathbb{E}_Q(\tilde{g}_Q^{(r)})\big|\big|^2\big]\!+\!\big|\big|\mathbb{E}_Q(\tilde{g}_Q^{(r)})\big|\big|^2\bigg]\\
		&=\mathbb{E}_{\boldsymbol{z}}\bigg[\mathbb{E}_{Q}\big[\frac{1}{N^2}\sum_{j=1}^{N}\big|\big|\tilde{g}_{Qj}^{(r)}-\tilde{g}_{j}^{(r)}\big|\big|^2\big]+\big|\big|\frac{1}{N}\sum_{j=1}^{N}\tilde{g}_{j}^{(r)})\big|\big|^2\bigg]\\
		&\overset{(a)}{\leq}\mathbb{E}_{\boldsymbol{z}}\bigg[\big[\frac{q}{N}\sum_{j=1}^{N}(\alpha_j^r(D_j^r-1)+1)\big|\big|\tilde{g}_{j}^{(r)}\big|\big|^2\big]+\big|\big|\frac{1}{N}\sum_{j=1}^{N}\tilde{g}_{j}^{(r)}\big|\big|^2\bigg]\\
		&=\frac{q}{N^2}\sum_{j=1}^{N}(\alpha_j^r(D_j^r-1)+1)\mathbb{E}_{\boldsymbol{z}}\big|\big|\tilde{g}_{j}^{(r)}\big|\big|^2+\mathbb{E}_{\boldsymbol{z}}\big|\big|\frac{1}{N}\sum_{j=1}^{N}\tilde{g}_{j}^{(r)}\big|\big|^2\\
		&=\frac{q}{N^2}\sum_{j=1}^{N}(\alpha_j^r(D_j^r-1)+1)\big(\mathbb{E}_{\boldsymbol{z}}\big|\big|\tilde{g}_{j}^{(r)}-{g}_{j}^{(r)}\big|\big|^2+\big|\big|{g}_{j}^{(r)}\big|\big|^2\big)\\
		&\quad+\mathbb{E}_{\boldsymbol{z}}\big|\big|\frac{1}{N}\sum_{j=1}^{N}\tilde{g}_{j}^{(r)}-\frac{1}{N}\sum_{j=1}^{N}{g}_{j}^{(r)}\big|\big|^2+\big|\big|\frac{1}{N}\sum_{j=1}^{N}{g}_{j}^{(r)}\big|\big|^2\\
		&\overset{(b)}{\leq}\frac{q}{N^2}\sum_{j=1}^{N}(\alpha_j^r(D_j^r\!-\!1)\!+\!1)\big(\tau\sigma^2\!+\!\big|\big|{g}_{j}^{(r)}\big|\big|^2\big)\!+\!\frac{\tau\sigma^2}{N}\!+\!\frac{1}{N}\sum_{j=1}^{N}\big|\big|{g}_{j}^{(r)}\big|\big|^2\\
		&\leq\frac{\tau\sigma^2}{N}(\frac{q}{N}\sum_{j=1}^{N}(\alpha_j^r(t-1)+1)+1)\\
		&\quad\!+\!\frac{1}{N}\sum_{j=1}^{N}(\frac{q(\alpha_j^r(t-1)+1)}{N}\!+\!1)\big|\big|{g}_{j}^{(r)}\big|\big|^2\\
		&\leq \frac{\tau\sigma^2}{N}(\frac{q}{N}\sum_{j=1}^{N}(\alpha_j^r(t-1)+1)+1)\\
		&\quad\!+\!\frac{\tau}{N}\sum_{j=1}^{N}(\frac{q(\alpha_j^r(t-1)+1)}{N}\!+\!1)\sum_{c=0}^{\tau-1}||g_j^{(c,r)}||^2,
		\end{aligned}
		\end{equation}
		where $(a)$ follows by the Corollary \ref{co1} and the definition of $\alpha_j^r$, and $(b)$ holds by the assumption \ref{ass3} and the fact that the sampling process of different clients is independent. In addition, we used the fact that for any random variable $\boldsymbol{X}$,  $\mathbb{E}||X||^2=\mathbb{E}||X-\mathbb{E}X||^2+||\mathbb{E}X||^2$.
	\end{proof}
	The following two lemmas are used in our proof.
	\begin{lemma}[See \cite{2020Federated}]\label{l2}
		Under Assumption \ref{ass2}, the expected inner product between stochastic gradient and full batch gradient can be bounded with:
		\begin{equation}
		\begin{aligned}
		-\mathbb{E}[\!<\!\nabla f(\omega^{(r)}), \eta\tilde{g}_Q^{(r)}\!\!>]&\leq\frac{\eta}{2N}\sum_{j,c}\bigg[\!-\!||\nabla f(\omega^{(r)})||^2\!-\!||\nabla f(\omega_j^{(c,r)})||^2\\
		&\quad+L^2\mathbb{E}||\omega^{(r)}-\omega_j^{(c,r)}||^2\bigg].
		\end{aligned}
		\end{equation}
	\end{lemma}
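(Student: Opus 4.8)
The plan is to strip off the quantization and the stochastic-sampling noise first, then convert the inner product into squared-norm terms via the polarization identity, and finally invoke $L$-smoothness. Since $\nabla f(\omega^{(r)})$ is deterministic given the global model $\omega^{(r)}$, I would pull it outside the expectation and write $-\mathbb{E}[\langle\nabla f(\omega^{(r)}),\eta\tilde{g}_Q^{(r)}\rangle]=-\eta\langle\nabla f(\omega^{(r)}),\mathbb{E}[\tilde{g}_Q^{(r)}]\rangle$. I would then evaluate $\mathbb{E}[\tilde{g}_Q^{(r)}]$ by taking the quantization expectation first: the unbiasedness $\mathbb{E}[Q(\boldsymbol{x})]=\boldsymbol{x}$ from Corollary~\ref{co1} gives $\mathbb{E}_Q[\tilde{g}_Q^{(r)}]=\frac{1}{N}\sum_j\sum_c\tilde{g}_j^{(c,r)}$. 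Taking the sampling expectation $\mathbb{E}_{\boldsymbol{z}}$ and using homogeneity ($f_j=f$), Assumption~\ref{ass3} yields $\mathbb{E}_{\boldsymbol{z}}[\tilde{g}_j^{(c,r)}]=\mathbb{E}[g_j^{(c,r)}]=\mathbb{E}[\nabla f(\omega_j^{(c,r)})]$, so the left-hand side reduces to $-\frac{\eta}{N}\sum_{j,c}\mathbb{E}[\langle\nabla f(\omega^{(r)}),\nabla f(\omega_j^{(c,r)})\rangle]$.

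The second step is purely algebraic. I would apply the identity $-\langle a,b\rangle=\frac{1}{2}(\|a-b\|^2-\|a\|^2-\|b\|^2)$ with $a=\nabla f(\omega^{(r)})$ and $b=\nabla f(\omega_j^{(c,r)})$ inside each summand, giving
\[
-\langle\nabla f(\omega^{(r)}),\nabla f(\omega_j^{(c,r)})\rangle=\tfrac{1}{2}\big(\|\nabla f(\omega^{(r)})-\nabla f(\omega_j^{(c,r)})\|^2-\|\nabla f(\omega^{(r)})\|^2-\|\nabla f(\omega_j^{(c,r)})\|^2\big).
\]
I would then bound the first term using $L$-smoothness (Assumption~\ref{ass2}), namely $\|\nabla f(\omega^{(r)})-\nabla f(\omega_j^{(c,r)})\|^2\leq L^2\|\omega^{(r)}-\omega_j^{(c,r)}\|^2$, and leave the two negative squared-gradient terms unchanged. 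Summing over $j,c$ and multiplying by $\eta/N$ reproduces the claimed inequality exactly.

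The one point that needs care is the expectation bookkeeping: the quantization randomness, the minibatch randomness, and the dependence of the local iterate $\omega_j^{(c,r)}$ on earlier samples all live inside a single $\mathbb{E}$. I would make this rigorous by conditioning level by level — first $\mathbb{E}_Q$ conditioned on everything else, which removes $Q$ by its unbiasedness, then $\mathbb{E}_{\boldsymbol{z}}$ along the natural filtration so that $\tilde{g}_j^{(c,r)}$ is unbiased for $\nabla f(\omega_j^{(c,r)})$ given the iterate. Because the statement is quoted from \cite{2020Federated} and involves no compression-specific quantity (the quantizer enters only through its unbiasedness, which exactly mirrors the stochastic-gradient unbiasedness), I expect no genuine difficulty beyond this conditioning; the homogeneity assumption $f_j=f$ is precisely what lets me identify $g_j^{(c,r)}$ with $\nabla f(\omega_j^{(c,r)})$ so that the cross term collapses cleanly into the smoothness bound.
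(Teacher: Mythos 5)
Your proof is correct and is exactly the standard argument: the paper itself gives no proof of this lemma (it is imported verbatim from \cite{2020Federated}), and the cited source proves it the same way you do --- unbiasedness of the quantizer and of the minibatch gradients via the tower property, the identity $2\langle a,b\rangle=\|a\|^2+\|b\|^2-\|a-b\|^2$, and $L$-smoothness to bound $\|\nabla f(\omega^{(r)})-\nabla f(\omega_j^{(c,r)})\|^2$. The only cosmetic discrepancy is that your derivation naturally places an expectation on the $\|\nabla f(\omega_j^{(c,r)})\|^2$ term (as it should, since $\omega_j^{(c,r)}$ is random), whereas the lemma's statement omits it; this is the paper's notational sloppiness, not a gap in your argument.
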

	\begin{lemma}[See \cite{2020Federated}]\label{l3}
		Under Assumption \ref{ass3}, we have:
		\begin{equation}
		\mathbb{E}\big[||\omega^{(r)}-\omega_j^{(c,r)}||^2\big]\leq\eta^2\tau(\sum_{c=0}^{\tau-1}||g_j^{(c,r)}||^2+\sigma^2).
		\end{equation}
	\end{lemma}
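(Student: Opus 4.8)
The plan is to prove the client-drift bound of Lemma~\ref{l3} by first unrolling the inner SGD loop into a single sum, and then splitting that sum into a deterministic ``signal'' part and a zero-mean ``noise'' part, each controlled by a different mechanism. First I would invoke the local update $\boldsymbol{\omega}_j^{(c+1,r)}=\boldsymbol{\omega}_j^{(c,r)}-\eta\tilde g_j^{(c,r)}$ together with the initialization $\boldsymbol{\omega}_j^{(0,r)}=\boldsymbol{\omega}^{(r)}$ to telescope the iterates, giving $\boldsymbol{\omega}^{(r)}-\boldsymbol{\omega}_j^{(c,r)}=\eta\sum_{k=0}^{c-1}\tilde g_j^{(k,r)}$, so that $\mathbb{E}\|\boldsymbol{\omega}^{(r)}-\boldsymbol{\omega}_j^{(c,r)}\|^2=\eta^2\,\mathbb{E}\big\|\sum_{k=0}^{c-1}\tilde g_j^{(k,r)}\big\|^2$. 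This reduces the whole lemma to bounding the second moment of a partial sum of stochastic gradients.

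Next I would decompose each stochastic gradient as $\tilde g_j^{(k,r)}=g_j^{(k,r)}+\xi_k$, with $\xi_k\triangleq\tilde g_j^{(k,r)}-g_j^{(k,r)}$. Letting $\mathcal F_k$ be the $\sigma$-algebra generated by the minibatches used in local steps $0,\dots,k-1$, Assumption~\ref{ass3} and the independence of the minibatches across local steps make $\{\xi_k\}$ a martingale-difference sequence: $g_j^{(k,r)}$ is $\mathcal F_k$-measurable, $\mathbb{E}[\xi_k\mid\mathcal F_k]=0$, and $\mathbb{E}[\|\xi_k\|^2\mid\mathcal F_k]\le\sigma^2$. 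For the signal part I would apply Jensen/Cauchy--Schwarz, $\big\|\sum_{k=0}^{c-1}g_j^{(k,r)}\big\|^2\le c\sum_{k=0}^{c-1}\|g_j^{(k,r)}\|^2$, then use $c\le\tau$ and extend the summation to all $\tau$ local steps to reach $\tau\sum_{k=0}^{\tau-1}\|g_j^{(k,r)}\|^2$. For the noise part I would use the martingale orthogonality $\mathbb{E}\langle\xi_i,\xi_k\rangle=0$ for $i\ne k$ (obtained by the tower property, conditioning on the filtration of the larger index) so that the variances add \emph{linearly}, $\mathbb{E}\big\|\sum_{k=0}^{c-1}\xi_k\big\|^2=\sum_{k=0}^{c-1}\mathbb{E}\|\xi_k\|^2\le c\sigma^2\le\tau\sigma^2$. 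Multiplying the two contributions by $\eta^2$ then yields exactly the claimed bound $\eta^2\tau\big(\sum_{c=0}^{\tau-1}\|g_j^{(c,r)}\|^2+\sigma^2\big)$.

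The step I expect to be the main obstacle is the treatment of the noise, and in particular the interaction between signal and noise. A crude triangle-inequality bound on $\big\|\sum_k\tilde g_j^{(k,r)}\big\|^2$ would accumulate the per-step variance quadratically and produce $\tau^2\sigma^2$ rather than the stated $\tau\sigma^2$, so the key is to exploit conditional unbiasedness and cross-step independence to make the variance grow only linearly in $\tau$. The delicate point is the signal--noise cross term $\mathbb{E}\big\langle\sum_{k}g_j^{(k,r)},\sum_{\ell}\xi_\ell\big\rangle$: the summands with $\ell\ge k$ vanish because $g_j^{(k,r)}$ is $\mathcal F_\ell$-measurable while $\xi_\ell$ is conditionally mean-zero, whereas the terms with $\ell<k$ encode the dependence of later gradients on earlier noise and must be argued away through the step-by-step conditioning; controlling this correlation cleanly, rather than by a lossy constant-factor relaxation, is the crux of obtaining the tight single factor of $\tau$ on the variance term.
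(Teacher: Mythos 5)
The paper itself gives no proof of this lemma: it is imported from \cite{2020Federated}, and the proof there follows exactly your plan (unroll the local updates, split the sum into the full-gradient part and the stochastic-noise part, Cauchy--Schwarz on the first, orthogonality on the second). Your treatment of the two easy pieces is correct: the telescoping identity $\omega^{(r)}-\omega_j^{(c,r)}=\eta\sum_{k=0}^{c-1}\tilde g_j^{(k,r)}$, the bound $\|\sum_{k<c}g_j^{(k,r)}\|^2\le c\sum_{k<c}\|g_j^{(k,r)}\|^2\le\tau\sum_{k<\tau}\|g_j^{(k,r)}\|^2$, and the martingale identity $\mathbb{E}\|\sum_{k<c}\xi_k\|^2=\sum_{k<c}\mathbb{E}\|\xi_k\|^2\le\tau\sigma^2$ are all fine.

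The gap is precisely the point you flag and then leave open: the cross terms $\mathbb{E}\langle g_j^{(k,r)},\xi_\ell\rangle$ with $\ell<k$. These cannot be ``argued away through step-by-step conditioning'': conditioning on $\mathcal{F}_\ell$ fails because $g_j^{(k,r)}$ is not $\mathcal{F}_\ell$-measurable, and conditioning on $\mathcal{F}_k$ fails because $\xi_\ell$ is then known rather than mean-zero. Indeed these terms are nonzero in general: for $f_j(\omega)=\tfrac{L}{2}\|\omega\|^2$ one has $g_j^{(k,r)}=L\,\omega_j^{(k,r)}$, and $\omega_j^{(k,r)}$ contains the summand $-\eta(1-\eta L)^{k-1-\ell}\xi_\ell$, so $\mathbb{E}\langle g_j^{(k,r)},\xi_\ell\rangle=-\eta L(1-\eta L)^{k-1-\ell}\,\mathbb{E}\|\xi_\ell\|^2\neq0$; moreover the sign flips to positive in regions of negative curvature, so the cross term cannot simply be discarded as favorable. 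Hence your proof is incomplete at exactly the step you yourself call the crux. What the cited proof (and the analogous step $(b)$ inside this paper's proof of Lemma 1) actually does is apply $\mathbb{E}\|X\|^2=\mathbb{E}\|X-\mathbb{E}X\|^2+\|\mathbb{E}X\|^2$ while treating the trajectory gradients $g_j^{(k,r)}$ as deterministic, i.e.\ the cross term is set to zero by convention, not derived. To make your argument rigorous you must either adopt that convention explicitly (interpret the $g_j^{(c,r)}$ on the right-hand side as conditionally fixed along the realized trajectory), or pay a constant: $\|a+b\|^2\le2\|a\|^2+2\|b\|^2$ eliminates the cross term entirely and yields $\mathbb{E}\|\omega^{(r)}-\omega_j^{(c,r)}\|^2\le2\eta^2\tau(\sum_{c=0}^{\tau-1}\mathbb{E}\|g_j^{(c,r)}\|^2+\sigma^2)$, at the cost of a factor $2$ that would propagate into the learning-rate condition (\ref{lea}) of Theorem \ref{main}.
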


	From the $L$-smoothness gradient assumption on $f$, we have:
	\begin{equation}\label{up1}
	f(\omega^{(r+1)})-f(\omega^{(r)})\leq-\gamma\!<\!\nabla f(\omega^{(r)}), \eta\tilde{g}_Q^{(r)}\!\!>+\frac{\gamma^2\eta^2L}{2}||\tilde{g}_Q^{(r)}||^2
	\end{equation}
	By taking expectation on both sides of (\ref{up1}) over sampling, we get:
	\begin{equation}\label{oneit}
	\begin{aligned}
	\mathbb{E}&\bigg[f(\omega^{(r+1)})-f(\omega^{(r)})\bigg]\\
	&\leq\mathbb{E}\big[-\gamma\!<\!\nabla f(\omega^{(r)}), \eta\tilde{g}_Q^{(r)}\!\!>\big]+\frac{\gamma^2\eta^2L}{2}\mathbb{E}||\tilde{g}_Q^{(r)}||^2\\
	&\overset{(a)}{\leq} \frac{\eta\gamma}{2N}\sum_{j,c}\bigg[\!-\!||\nabla f(\omega^{(r)})||^2\!-\!||\nabla f(\omega_j^{(c,r)})||^2\\
	&\quad+L^2\eta^2\tau(\sum_{c=0}^{\tau-1}||g_j^{(c,r)}||^2+\sigma^2)\bigg]\\
	&\quad+\frac{\gamma^2\eta^2L}{2}\bigg[\frac{\tau\sigma^2}{N}(\frac{q}{N}\sum_{j=1}^{N}(\alpha_j^r(t-1)+1)+1)\\
	&\quad\!+\!\frac{\tau}{N}\sum_{j=1}^{N}(\frac{q(\alpha_j^r(t-1)+1)}{N}\!+\!1)\sum_{c=0}^{\tau-1}||g_j^{(c,r)}||^2\bigg]\\
	&=-\frac{\eta\gamma\tau}{2}||\nabla f(\omega^{(r)})||^2-\bigg(1-\tau^2L^2\eta^2\\
	&\quad-(\frac{\sum_{j=1}^{N}q(\alpha_j^r(t-1)+1)}{N^2}+1)\tau\gamma L\eta\bigg)\frac{\eta\gamma}{2N}\sum_{j=1}^{N}\sum_{c=0}^{\tau-1}||g_j^{(c,r)}||^2\\
	&\quad+\frac{L\tau\gamma\eta^2}{2N}[NL\eta\tau+\gamma[q(\alpha_j^r(t-1)+1)+1]]\sigma^2\\
	&\overset{(b)}{\leq}-\frac{\eta\gamma\tau}{2}||\nabla f(\omega^{(r)})||^2\\
	&\quad+\frac{L\tau\gamma\eta^2}{2N}[NL\eta\tau+\gamma[\frac{\sum_{j=1}^{N}q(\alpha_j^r(t-1)+1)}{N}+1]]\sigma^2,\\
	\end{aligned}
	\end{equation}
	where $(a)$ follows by the Lemma \ref{l1}, Lemma \ref{l2} and Lemma \ref{l3}, and $(b)$ holds by the following condition
	\begin{equation}
	1\geq \tau^2L^2\eta^2+(\frac{q}{N}+1)\tau\gamma L\eta,
	\end{equation}
	and the fact $\frac{\sum_{j=1}^{N}q(\alpha_j^r(t-1)+1)}{N}\leq q$.

	Summing up (\ref{oneit}) for all $R$ communication rounds and rearranging the terms gives:
	\begin{equation}
	\begin{aligned}
	\frac{1}{R}\sum_{r=0}^{R-1}||\nabla f(\omega^{(r)})||^2&\leq \frac{2(f(\omega^{(0)})-f(\omega^{(*)}))}{\tau\gamma\eta R}\\
	&\ \!+\!\frac{\gamma L\eta(q\alpha(t\!-\!1)\!+\!q\!+\!1)}{N}\sigma^2\!+\! \tau L^2\eta^2\sigma^2,
	\end{aligned}
	\end{equation}
	where $\alpha\triangleq\frac{1}{RN}\sum_{r,j}\alpha_j^r$. We completed the proof.

\end{document}